\newtheorem{theorem}{Theorem}
\newtheorem{definition}[theorem]{Definition}
\newtheorem{lemma}{Lemma}
\newlength\myindent
\title{Differentially Private Federated Learning\\for Cancer Prediction}
\author{\IEEEauthorblockN{Constance Beguier, Jean Ogier du Terrail, Iqraa Meah, Mathieu Andreux, Eric W. Tramel\\}
\IEEEauthorblockA{Owkin Inc. \\ New York, USA. \\
Email: firstname.lastname@owkin.com}
}
\begin{document}
\maketitle
\begin{abstract}
Since 2014, the NIH funded iDASH (integrating Data for Analysis, Anonymization, SHaring)
National Center for Biomedical Computing has hosted
yearly competitions on the topic of private computing for genomic data.
For one track of the 2020 iteration of this competition, participants were challenged 
to produce an approach to federated learning~(FL) training of genomic cancer prediction 
models using differential privacy~(DP), with submissions ranked according to 
held-out test accuracy for a given set of DP budgets.
More precisely, in this track, we are tasked with training a supervised model for the
prediction of breast cancer occurrence from genomic data split between two virtual 
centers while ensuring data privacy with respect to model transfer via DP.
In this article, we present our 3$^{\rm rd}$ place submission to this competition.
During the competition, we encountered two main challenges discussed in this article:
i) ensuring correctness of the privacy budget evaluation
and ii) achieving an acceptable trade-off between prediction performance and privacy budget.
\end{abstract}

\section{Introduction}

Machine learning~(ML) is increasingly used in healthcare to solve a large variety of problems,
including disease diagnosis and biomarker discovery~\cite{yu2018artificial}.
In ML, generally, the larger the dataset, the more accurate the trained ML model is.
Thus, one would like to collect the largest possible dataset for a given problem
in a single location.
However, in many contexts and especially in healthcare, data sensitivity
and increasingly stringent regulations complicate the creation of such centralized datasets.

Federated learning (FL)~\cite{FL_SS15,FL_MMRA16}
is a paradigm to train an ML model across several datasets in different locations 
in order to avoid the need to collect training data to a single location.
The main idea is to share model parameters,
aggregating them regularly after local training steps, leading to a shared common model.
Leaving data at its source is a big step into increasing data privacy but is not sufficient.
Indeed, over the past years, many privacy attacks based on the knowledge of
model parameters have been
introduced~\cite{FLattacks_SSM19,FLattacks_SDSO19,FLattacks_ZLH19,FLattacks_ZMB20}.
For instance, a membership inference attack creates an oracle to answer
whether a given sample has been used during the training.

A common framework to secure FL training with some robust privacy guarantees is
differential privacy (DP)~\cite{dwork2014algorithmic}.
The main goal of a DP mechanism is to release some information obtained from a database,
while preserving the privacy of the individuals composing this database.

This paper is organized as follows.
In Sec.~\ref{sec:background}, we review background material on DP
and its applications to ML, while 
in Sec.~\ref{sec:method} we detail the method used in our submission.
Sec.~\ref{sec:experiments} provides the experimental protocol and results obtained.
Finally, in Sec.~\ref{sec:conclusion} we present our conclusions.

The source code of our submission is available online at
\url{https://github.com/owkin/idash-2020}.

\section{Background}\label{sec:background}

\subsection{Differential Privacy}

DP~\cite{dwork2014algorithmic} is a widely used framework to
secure model training and protect training data.
The classical definition of DP, provided below, relies on the notion of so-called 
\emph{adjacent} databases, i.e. databases differing in at most a single element 
(or \emph{sample}, as it is the case in ML datasets).
\begin{definition}[$(\epsilon, \delta)$-Differential Privacy (DP) \cite{dwork2014algorithmic,RDP_Mironov17}]
A randomized mechanism $f: \mathcal{D} \rightarrow \mathcal{R}$ satisfies $(\epsilon, \delta)$-DP,
if for any adjacent $D, D^{\prime} \in \mathcal{D}$ and $S \subset \mathcal{R}$,
\begin{equation}\label{eq:dpdef}
Pr[f(D) \in S] \leq e^{\epsilon} Pr[f(D^{\prime}) \in S] + \delta,
\end{equation}
where $\epsilon$ is the privacy budget and $\delta$ is the failure probability.
\end{definition}
It is worth noting that $(\epsilon, \delta)$-DP definitions satisfy inclusion rules,
as stated in the following lemma, whose proof directly stems from Equation~\eqref{eq:dpdef}.
\begin{lemma}[Order relationships of $(\epsilon, \delta)$-DP]\label{lemma:inclusion_dp}
If $f: \mathcal{D} \rightarrow \mathcal{R}$ satisfies $(\epsilon, \delta)$-DP,
then for any $(\epsilon', \delta')$ such that $\epsilon \leq \epsilon'$ and $\delta \leq \delta'$,
$f$ satisfies $(\epsilon', \delta')$-DP.
\end{lemma}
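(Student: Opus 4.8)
The plan is to work directly from the $(\epsilon,\delta)$-DP inequality in Equation~\eqref{eq:dpdef}, since the lemma claims that enlarging either the privacy budget or the failure probability preserves the DP guarantee. I would fix an arbitrary pair of adjacent databases $D, D' \in \mathcal{D}$ and an arbitrary measurable set $S \subset \mathcal{R}$, and then show that the defining inequality for $(\epsilon',\delta')$ follows from the one we are given for $(\epsilon,\delta)$.

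\medskip

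First I would record the hypothesis: because $f$ satisfies $(\epsilon,\delta)$-DP, we have $Pr[f(D) \in S] \leq e^{\epsilon} Pr[f(D') \in S] + \delta$. The goal is to bound the same left-hand side by $e^{\epsilon'} Pr[f(D') \in S] + \delta'$. The key observation is that the probability $Pr[f(D') \in S]$ is nonnegative, so the map $\epsilon \mapsto e^{\epsilon} Pr[f(D') \in S]$ is monotone nondecreasing; combined with monotonicity of $\delta \mapsto \delta$, each term on the right can only grow when we pass from $(\epsilon,\delta)$ to the larger $(\epsilon',\delta')$.

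\medskip

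Concretely, I would chain two inequalities. Since $\epsilon \leq \epsilon'$ gives $e^{\epsilon} \leq e^{\epsilon'}$, and since $Pr[f(D') \in S] \geq 0$, we have $e^{\epsilon} Pr[f(D') \in S] \leq e^{\epsilon'} Pr[f(D') \in S]$. Together with $\delta \leq \delta'$, adding these yields
\begin{equation*}
Pr[f(D) \in S] \leq e^{\epsilon} Pr[f(D') \in S] + \delta \leq e^{\epsilon'} Pr[f(D') \in S] + \delta'.
\end{equation*}
Because $D, D'$ and $S$ were arbitrary, this establishes that $f$ satisfies $(\epsilon',\delta')$-DP, completing the argument.

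\medskip

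I do not anticipate a genuine obstacle here: the result is essentially immediate once one notes the nonnegativity of the probability term, which is what licenses the monotone substitution $e^{\epsilon} \mapsto e^{\epsilon'}$. The only point demanding a sliver of care is making the nonnegativity explicit, since without it the exponential factor could not be enlarged freely; everything else is a direct consequence of Equation~\eqref{eq:dpdef}.
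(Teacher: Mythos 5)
Your proof is correct and matches the paper's intent exactly: the paper gives no separate argument, stating only that the lemma ``directly stems from Equation~\eqref{eq:dpdef},'' and your chained inequality (using $e^{\epsilon} \leq e^{\epsilon'}$, nonnegativity of $Pr[f(D') \in S]$, and $\delta \leq \delta'$) is precisely that direct derivation, spelled out. No gaps.
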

A simple method to create an $(\epsilon, \delta)$-DP mechanism
out of a function $f$ is to add independent Gaussian noise,
with variance proportional to the sensitivity of the function to be protected.
\begin{definition}[Sensitivity \cite{RDP_Mironov17}]\label{thm:sensitivity}
The sensitivity of a function $f: \mathcal{D} \rightarrow \mathcal{R}$
is defined as
\begin{equation}
\Delta f = \max_{D, D^{\prime} \in \mathcal{D}} || f(D) - f(D^{\prime}) ||_2,
\end{equation}
where the maximum is taken over all adjacent $D, D^{\prime} \in \mathcal{D}$.
\end{definition}
To evaluate the privacy budget spent by a randomized mechanism,
a tight analysis tracking the privacy loss must be performed.
If this analysis is too coarse, the resulting bounds will not be tight enough
and the privacy cost associated with DP operations will be
overestimated.
The R\'enyi Differential Privacy (RDP)\cite{RDP_Mironov17,RDP_SGM_MRZ19},
a relaxation of $(\epsilon, \delta)$-DP,
is a widely used tool to perform this analysis as it leads
to simpler composition results than $(\epsilon, \delta)$-DP.
\begin{definition}[$(\alpha, \epsilon)$-R\'enyi Differential Privacy (RDP) \cite{RDP_Mironov17}]
A randomized mechanism $f: \mathcal{D} \rightarrow \mathcal{R}$ satisfies $(\alpha, \epsilon)$-RDP,
if for any adjacent $D, D^{\prime} \in \mathcal{D}$,
\begin{equation}
D_{\alpha}(f(D) || f(D^{\prime})) \triangleq \frac{1}{\alpha - 1} \log \mathbb{E}_{x \sim f(D^{\prime})} \left( \frac{f(D)}{f(D^{\prime})} \right)^{\alpha} \leq \epsilon
\end{equation}
where $D_{\alpha}$ is the R\'enyi divergence of order $\alpha > 1$.
\end{definition}
It is easy to see that both definitions are closely related.
In most cases, it is difficult to directly evaluate the privacy cost of 
a single complex operation, so instead operations are often evaluated as a
composition of multiple operations whose privacy cost is more easily evaluated.
The composition theorems of DP allow for the direct computation of the 
privacy cost of a sequence of DP operations.

\begin{theorem}[RDP composition \cite{RDP_Mironov17}]\label{thm:rdpcomp}
Let $f: \mathcal{D} \rightarrow \mathcal{R}_1$ be $(\alpha, \epsilon_1)$-RDP
and $g: \mathcal{R}_1 \times \mathcal{D} \rightarrow \mathcal{R}_2$ be $(\alpha, \epsilon_2)$-RDP,
then the mechanism defined as $(X, Y)$ where $X \sim f(D)$ and $Y \sim g(X, D)$
satisfies $(\alpha, \epsilon_1 + \epsilon_2)$-RDP.
\end{theorem}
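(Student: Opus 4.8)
The plan is to fix an arbitrary pair of adjacent databases $D, D' \in \mathcal{D}$ and bound the R\'enyi divergence of order $\alpha$ between the joint output distributions of the composed mechanism directly by $\epsilon_1 + \epsilon_2$. Writing $p$ and $q$ for the densities of the joint output $(X,Y)$ under $D$ and $D'$ respectively, the first move is to exploit the sequential structure: since $X \sim f(D)$ and, conditionally on $X = x$, $Y \sim g(x, D)$, both joint densities factor as a product of a marginal for $x$ and a conditional for $y$ given $x$. I would denote these $p(x,y) = p_f(x)\, p_g(y \mid x)$ and $q(x,y) = q_f(x)\, q_g(y \mid x)$.

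Next I would expand the quantity inside the logarithm in the definition of $D_\alpha$. Because all densities are nonnegative, Tonelli's theorem permits writing the expectation as an iterated integral and separating the two variables,
\[
\mathbb{E}_{(x,y)\sim q}\!\left[\left(\frac{p(x,y)}{q(x,y)}\right)^{\!\alpha}\right]
= \int \frac{p_f(x)^{\alpha}}{q_f(x)^{\alpha-1}} \left( \int \frac{p_g(y\mid x)^{\alpha}}{q_g(y\mid x)^{\alpha-1}} \, dy \right) dx,
\]
where the inner integral equals exactly $\exp\!\bigl((\alpha-1)\, D_\alpha(g(x,D) \,\|\, g(x,D'))\bigr)$.

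Now I would invoke the two hypotheses in turn. Because $g$ is $(\alpha, \epsilon_2)$-RDP and adjacency is measured in the $\mathcal{D}$-argument with the $\mathcal{R}_1$-argument held fixed, the inner R\'enyi divergence is at most $\epsilon_2$ for \emph{every} value of $x$; hence the inner integral is bounded by $e^{(\alpha-1)\epsilon_2}$ uniformly in $x$. Pulling this constant factor out of the outer integral leaves $\int p_f(x)^{\alpha}/q_f(x)^{\alpha-1}\, dx = \exp\!\bigl((\alpha-1)\, D_\alpha(f(D) \| f(D'))\bigr)$, which the $(\alpha,\epsilon_1)$-RDP property of $f$ bounds by $e^{(\alpha-1)\epsilon_1}$. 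Multiplying the two bounds gives $e^{(\alpha-1)(\epsilon_1+\epsilon_2)}$, and applying $\tfrac{1}{\alpha-1}\log(\cdot)$ to both sides yields $D_\alpha\bigl((X,Y)_D \,\|\, (X,Y)_{D'}\bigr) \le \epsilon_1 + \epsilon_2$. Since $D, D'$ were arbitrary adjacent databases, this is the claim.

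The step I expect to be the crux is the uniformity of the bound on the conditional divergence: the argument works only because the RDP guarantee for $g$ holds simultaneously for all fixed first arguments $x$, which is precisely what lets me factor $e^{(\alpha-1)\epsilon_2}$ out of the $x$-integral \emph{before} applying the guarantee for $f$. Were the guarantee to hold only in some averaged sense over $x$, the two factors could not be separated so cleanly. The remaining technical care is merely justifying the interchange of integration order, which follows from nonnegativity of the integrands via Tonelli.
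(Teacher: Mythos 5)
Your proof is correct. Note that the paper does not prove this theorem at all---it is quoted as background material with a citation to Mironov's RDP paper---but your argument (factorize the joint density, apply Tonelli, bound the conditional divergence \emph{uniformly} in $x$ using the RDP guarantee of $g$, then bound the marginal term using the guarantee of $f$) is precisely the standard proof of adaptive RDP composition given in that cited reference, including the correct identification of the crux: the bound on the inner integral must hold for every fixed first argument $x$, not merely on average.
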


\begin{theorem}[RDP post-processing \cite{RDP_Mironov17}]\label{thm:postprocessing}
Let $g: \mathcal{R} \rightarrow \mathcal{R}^{\prime}$ be a randomized mapping.
If $f: \mathcal{D} \rightarrow \mathcal{R}$ is $(\alpha, \epsilon)$-RDP,
then $g \circ f$ is also $(\alpha, \epsilon)$-RDP.
\end{theorem}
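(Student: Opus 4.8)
The plan is to prove the RDP post-processing theorem (Theorem~\ref{thm:postprocessing}) directly from the definition of R\'enyi divergence, relying on the \emph{data-processing inequality} for the R\'enyi divergence. First I would fix two adjacent databases $D, D' \in \mathcal{D}$ and consider the output distributions $P = f(D)$ and $Q = f(D')$ on $\mathcal{R}$. By hypothesis, $f$ is $(\alpha, \epsilon)$-RDP, which means precisely that $D_\alpha(P \,\|\, Q) \leq \epsilon$. The goal is to show that the pushforward distributions of $P$ and $Q$ through the randomized map $g$ also have R\'enyi divergence at most $\epsilon$.

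The key step is to recognize that applying the randomized mapping $g$ is an instance of passing both distributions through the same Markov kernel. If we denote by $g_\# P$ and $g_\# Q$ the laws of $g(X)$ when $X \sim P$ and $X \sim Q$ respectively, then $g \circ f(D)$ is distributed as $g_\# P$ and $g \circ f(D')$ as $g_\# Q$. The data-processing inequality for the R\'enyi divergence states that for any order $\alpha > 1$ and any Markov kernel $K$,
\begin{equation}
D_\alpha(K_\# P \,\|\, K_\# Q) \leq D_\alpha(P \,\|\, Q).
\end{equation}
Applying this with $K = g$ yields $D_\alpha(g_\# P \,\|\, g_\# Q) \leq D_\alpha(P \,\|\, Q) \leq \epsilon$. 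Since $D$ and $D'$ were arbitrary adjacent databases, this establishes that $g \circ f$ is $(\alpha, \epsilon)$-RDP, which is exactly the claim.

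I expect the main obstacle to be justifying the data-processing inequality itself, rather than the bookkeeping around it. One clean route is to invoke the variational (or joint-convexity) characterization of the R\'enyi divergence, from which monotonicity under Markov kernels follows; alternatively, one can prove it by hand using Jensen's inequality applied to the convex map $t \mapsto t^\alpha$ (for $\alpha > 1$) against the conditional law induced by the kernel $g$, together with Fubini's theorem to interchange the order of integration over $\mathcal{R}$ and $\mathcal{R}'$. The deterministic case (where $g$ is a fixed measurable function) is easier and essentially amounts to a change of variables, so a reasonable strategy is to treat that case first and then extend to genuinely randomized $g$ by conditioning on the internal randomness and averaging. Care must be taken that the inequality direction is preserved under this averaging, which is precisely where the convexity of $t \mapsto t^\alpha$ does the work.
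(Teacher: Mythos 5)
The paper offers no proof of Theorem~\ref{thm:postprocessing}; it is quoted as a known result from \cite{RDP_Mironov17}, where it is established exactly as you propose, by the data-processing inequality for the R\'enyi divergence applied to the Markov kernel induced by $g$. Your argument is correct and matches that canonical route (the direct Jensen-plus-Fubini version you sketch handles randomized $g$ in one step, so the deterministic-then-average detour is unnecessary; note that the averaging variant would actually require joint convexity of $(p,q) \mapsto p^{\alpha} q^{1-\alpha}$, not just convexity of $t \mapsto t^{\alpha}$).
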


Finally, the RDP privacy budget can be converted back to
an $(\epsilon, \delta)$-DP privacy budget with the help of the following theorem.

\begin{theorem}[RDP to DP conversion \cite{RDP_Mironov17}]\label{thm:rdp2dp}
If $f$ satisfies $(\alpha, \epsilon)$-RDP,
then $f$ satisfies $(\epsilon + \frac{\log ( 1 / \delta)}{\alpha - 1}, \delta)$-DP
for all $\delta \in (0, 1)$.
\end{theorem}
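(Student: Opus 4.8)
The plan is to fix adjacent databases $D, D' \in \mathcal{D}$ and a measurable set $S \subset \mathcal{R}$, and to write $P$ and $Q$ for the output densities $f(D)$ and $f(D')$, so that the RDP hypothesis reads $\mathbb{E}_{x \sim Q}[(P/Q)^\alpha] \le e^{(\alpha-1)\epsilon}$ (using that $\alpha - 1 > 0$). Abbreviating $\epsilon' = \epsilon + \frac{\log(1/\delta)}{\alpha-1}$, the goal is to establish $P(S) \le e^{\epsilon'} Q(S) + \delta$, which is exactly Eq.~\eqref{eq:dpdef}. First I would isolate the region where the privacy loss is too large by defining the \emph{bad} set $B = \{x \in \mathcal{R} : P(x) > e^{\epsilon'} Q(x)\}$, and split $P(S) = P(S \setminus B) + P(S \cap B)$.

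The first term is easy: by definition of $B$, on its complement we have the pointwise bound $P(x) \le e^{\epsilon'} Q(x)$, so $P(S \setminus B) \le e^{\epsilon'} Q(S \setminus B) \le e^{\epsilon'} Q(S)$, which already produces the leading term of the target inequality. It therefore remains to control the second term, and since $P(S \cap B) \le P(B)$ it suffices to show $P(B) \le \delta$; this is the step where the RDP assumption must be invoked and is the crux of the argument.

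To bound $P(B)$ I would insert the factor $(P/Q)^{1-\alpha}(P/Q)^{\alpha}$, which equals $P/Q$, and write
\begin{equation}
P(B) = \int_B P \, dx = \int_B \left(\frac{P}{Q}\right)^{1-\alpha} \left(\frac{P}{Q}\right)^{\alpha} Q \, dx.
\end{equation}
Because $\alpha > 1$ the exponent $1-\alpha$ is negative, and on $B$ we have $P/Q > e^{\epsilon'}$, so $(P/Q)^{1-\alpha} < e^{(1-\alpha)\epsilon'}$. Pulling this constant out, enlarging the domain of integration to all of $\mathcal{R}$, and recognizing the remaining integral as $\mathbb{E}_{x \sim Q}[(P/Q)^\alpha]$ gives
\begin{equation}
P(B) \le e^{(1-\alpha)\epsilon'} \, \mathbb{E}_{x \sim Q}\!\left[\left(\tfrac{P}{Q}\right)^{\alpha}\right] \le e^{(1-\alpha)\epsilon'} e^{(\alpha-1)\epsilon} = e^{(\alpha-1)(\epsilon - \epsilon')}.
\end{equation}
Substituting $\epsilon - \epsilon' = -\frac{\log(1/\delta)}{\alpha-1}$ collapses the exponent to $(\alpha-1)(\epsilon-\epsilon') = \log\delta$, whence $P(B) \le \delta$, which finishes the bound.

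I expect the main obstacle to be precisely this tail bound $P(B) \le \delta$: one must correctly track the direction of each inequality — in particular that $1-\alpha < 0$ \emph{reverses} the ratio bound on $B$ — and verify that the exponent algebra yields exactly $\delta$ for the chosen $\epsilon'$. The decomposition and the pointwise estimate on the complement of $B$ are routine; the only real content is this Chernoff/Markov-style step, which is what converts a moment control on the likelihood ratio into an $(\epsilon', \delta)$ guarantee.
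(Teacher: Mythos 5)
Your proof is correct, but note that the paper itself never proves this theorem: it is imported background material, stated with a citation to the R\'enyi DP paper of Mironov, so there is no in-paper argument to compare against. Your argument is the standard one for this conversion and is sound in every step: the decomposition over the bad set $B = \{x : P(x) > e^{\epsilon'}Q(x)\}$, the pointwise bound off $B$, and the Markov/Chernoff-style step $P(B) \le e^{(1-\alpha)\epsilon'}\,\mathbb{E}_{x\sim Q}[(P/Q)^{\alpha}] \le e^{(\alpha-1)(\epsilon-\epsilon')} = \delta$, where the sign of $1-\alpha$ is handled correctly. For what it is worth, Mironov's original proof takes a slightly different route: it applies H\"older's inequality with exponents $\alpha$ and $\alpha/(\alpha-1)$ to get $P(S) \le e^{\frac{\alpha-1}{\alpha}\epsilon}\, Q(S)^{(\alpha-1)/\alpha}$ for every event $S$, and then splits into cases according to whether $Q(S)$ is small (giving $P(S)\le\delta$) or large (giving $P(S)\le e^{\epsilon'}Q(S)$); your bad-set argument localizes the loss region instead of case-splitting on the event, and arguably makes the role of the moment bound more transparent. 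One point you gloss over, which is harmless but worth stating: inserting the factor $(P/Q)^{1-\alpha}(P/Q)^{\alpha} = P/Q$ implicitly assumes $P \ll Q$ on the support of $P$, which is forced anyway whenever the R\'enyi divergence is finite (otherwise the RDP hypothesis is vacuous), and the densities-versus-measures notation is the usual benign abuse.
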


\subsection{DP-SGD}
DP-SGD \cite{DPSGD_ACGM16} is the most commonly proposed approach for DP 
training of ML models with parameters $\theta$ with stochastic gradient descent (SGD)
via an $(\epsilon, \delta)$-DP mechanism.
At each SGD step of this algorithm, gradient clipping is used to bound the 
sensitivity of the batch-gradient calculation (see Thm. \ref{thm:sensitivity}),
and subsequently Gaussian noise of variance proportional to the selected
sensitivity bound is added to the clipped gradients.
We refer to Algorithm~\ref{algo:DPSGD} for details.

\begin{algorithm}[h!]
\caption{DP-SGD \cite{DPSGD_ACGM16}}
\label{algo:DPSGD}
\DontPrintSemicolon
\KwInput{
    Samples $\{x_1, ..., x_n\}$,
    empirical loss function $\mathcal{L}(\theta) = (1/N) \sum_i \mathcal{L}(\theta, x_i)$.
    Parameters: learning rates $\eta_t$,
    noise scale $\sigma$,
    group size $L$,
    gradient norm bound $C$.
}
\textbf{Initialize} $\theta_0$ randomly\;
\For{$t \in [T]$}{
    Take a random sample $L_t$ with sampling probability $L/N$\;
    \textbf{Compute gradient} \;
    For each $i \in L_t$, compute $g_t(x_i) \leftarrow \nabla_{\theta_t} \mathcal{L}(\theta_t, x_i)$\;
    \textbf{Clip gradients} \;
    $\bar{g}_t(x_i) \leftarrow \frac{g_t(x_i)}{max(1, ||g_t(x_i)||_2 / C)}$\;
    \textbf{Add noise}\;
    $\bar{g}_t \leftarrow \frac{1}{|L|} \left( \sum_i \bar{g}_t(x_i) + \mathcal{N}(0, \sigma^2 C^2 I) \right)$\;
    \textbf{Descent}\;
    $\theta_{t+1} \leftarrow \theta_t - \eta_t \bar{g}_t$
}
\KwOutput{
    $\theta_T$ and compute the overall privacy cost $(\epsilon, \delta)$ using a privacy accounting method.
}
\end{algorithm}

This algorithm has been implemented for PyTorch in Opacus\footnote{\url{https://github.com/pytorch/opacus}}
and for TensorFlow in TensorFlow Privacy\footnote{\url{https://github.com/tensorflow/privacy}}.
In these two implementations, the privacy accounting method is based on the privacy analysis of~\cite{RDP_SGM_MRZ19},
which assumes that samples are drawn according to the \emph{Sampled Gaussian Mechanism} (see Def.~\ref{def:SGM}).
Thus, when using these libraries, we must take care in the creation of batches
in order ensure that this assumption is respected.
In particular, as pointed out in~\cite{ImprovingDLwithDP_NSH20},
using shuffling at each epoch to create batches makes the privacy
budget calculations from~\cite{RDP_SGM_MRZ19} not applicable:
using them leads to incorrect estimates of the privacy cost of DP operations.

\subsection{Privacy Budget Evaluation of DP-SGD from \cite{RDP_SGM_MRZ19}}\label{subsec:RDP_DPSGD}

In this section, we recall the main results of~\cite{RDP_SGM_MRZ19}
allowing one to evaluate the privacy budget of DP-SGD.

\paragraph{Privacy Budget of one DP-SGD Step}
In \cite{RDP_SGM_MRZ19}, privacy budget evaluation is based on the Sampled Gaussian Mechanism (SGM),
whose definition is provided below.
\begin{definition}[Sampled Gaussian Mechanism (SGM) \cite{RDP_SGM_MRZ19}]
\label{def:SGM}
Let $f$ be a function mapping subsets of $\mathcal{S}$ to $\mathbb{R}^d$.
We define the Sampled Gaussian Mechanism (SGM) parameterized with the sampling rate $0<q\leq 1$ and the noise $\sigma>0$ as
\begin{align*}
SG_{q, \sigma} (S) \triangleq & f(\{x: x \in S \text{ is sampled with probability } q\}) \\
& + \mathcal{N}(0, \sigma^2 \mathbb{I}^d)
\end{align*}
\end{definition}

In~\cite{RDP_SGM_MRZ19}, it is first demonstrated how to evaluate the privacy budget of one DP-SGD step
through the lenses of SGM: in DP-SGD, $f$ is the clipped gradient evaluation
in sampled data points $f(\{x_i\}_{i \in B}) = \sum_{i \in B} \bar{g_t}(x_i)$.
If $\bar{g}_t$ is obtained by clipping $g_t$ with a gradient norm bound $C$,
then the sensitivity of $f$ is equal to $C$.

\begin{theorem}[RDP privacy budget of SGM~\cite{RDP_SGM_MRZ19}]\label{thm:rdp_sgm}
Let $SG_{q, \sigma}$ be the Sampled Gaussian Mechanism for some function $f$.
If $f$ has sensitivity $1$, then
$SG_{q, \sigma}$ satisfies $(\alpha, \epsilon)$-RDP whenever
\begin{equation}
\epsilon \leq \frac{1}{\alpha - 1} \log \max(A_{\alpha}(q, \sigma), B_{\alpha}(q, \sigma)),
\end{equation}
where
\begin{equation}
\begin{cases}
A_{\alpha}(q, \sigma) \triangleq \mathbb{E}_{z \sim \mu_0} [(\mu(z) / \mu_0(z))^{\alpha}], \\
B_{\alpha}(q, \sigma) \triangleq \mathbb{E}_{z \sim \mu} [(\mu_0(z) / \mu(z))^{\alpha}],
\end{cases}
\end{equation}
with $\mu_0 \triangleq \mathcal{N}(0, \sigma^2)$, $\mu_1 \triangleq \mathcal{N}(1, \sigma^2)$
and $\mu \triangleq (1-q)\mu_0 + q \mu_1$.

Further, it holds that $\forall (q, \sigma) \in (0,1]$, $\mathbb{R}^{+*}$, $A_{\alpha}(q, \sigma) \geq B_{\alpha}(q, \sigma)$.
Thus, $SG_{q, \sigma}$ satisfies $(\alpha, \frac{1}{\alpha - 1} \log (A_{\alpha}(q, \sigma)))$-RDP.
\end{theorem}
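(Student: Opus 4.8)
The plan is to bound, for every pair of adjacent inputs $S,S'$ and for both orderings, the R\'enyi divergence $D_{\alpha}(SG_{q,\sigma}(S)\,\|\,SG_{q,\sigma}(S'))$ by the scalar quantities $\frac{1}{\alpha-1}\log A_{\alpha}$ and $\frac{1}{\alpha-1}\log B_{\alpha}$, and then read off the RDP guarantee directly from the definition. It is convenient to work with $\chi_{\alpha}(P\|Q)\triangleq\int P^{\alpha}Q^{1-\alpha}=e^{(\alpha-1)D_{\alpha}(P\|Q)}$, so that multiplicative bounds on $\chi_{\alpha}$ become additive bounds on $D_{\alpha}$. Writing $S'=S\cup\{x_{0}\}$, the two mechanisms differ only through whether the marginal point $x_{0}$ may enter the subsample, which is precisely what produces the weight $q$ and the two comparison measures $\mu_{0}$ and $\mu=(1-q)\mu_{0}+q\mu_{1}$. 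Since RDP must control both orderings, I expect $A_{\alpha}$ to come from $D_{\alpha}(\mu\|\mu_{0})$ (adding the point) and $B_{\alpha}$ from $D_{\alpha}(\mu_{0}\|\mu)$ (removing it), the stated $\max$ being the worse of the two.

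First I would reduce the $d$-dimensional problem to a one-dimensional one. Condition on the random subsample $B$ of the shared elements: conditionally, $SG_{q,\sigma}(S)$ is $\mathcal{N}(f(B),\sigma^{2}I)$ while $SG_{q,\sigma}(S')$ is the mixture $(1-q)\mathcal{N}(f(B),\sigma^{2}I)+q\,\mathcal{N}(f(B\cup\{x_{0}\}),\sigma^{2}I)$. Translating by $-f(B)$ and using the rotational invariance of the isotropic Gaussian, I align the displacement $v\triangleq f(B\cup\{x_{0}\})-f(B)$ with the first coordinate; sensitivity $1$ gives $\|v\|_{2}\le 1$, and a monotonicity check shows the divergence is largest at $\|v\|_{2}=1$. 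The remaining $d-1$ coordinates carry identical $\mathcal{N}(0,\sigma^{2}I_{d-1})$ factors in both measures, so by additivity of the R\'enyi divergence over products they contribute nothing and the conditional problem collapses to the comparison of $\mu_{0}$ and $\mu$ on the line. To remove the conditioning on $B$ — noting that both mechanisms are mixtures over $B$ with the \emph{same} weights $w_{B}$ — I would use that $(p,p')\mapsto p^{\alpha}p'^{\,1-\alpha}$ is jointly convex for $\alpha>1$ (its Hessian is positive semidefinite, being a perspective-type function), so Jensen's inequality yields $\chi_{\alpha}\big(\sum_{B}w_{B}Q_{B}\,\|\,\sum_{B}w_{B}P_{B}\big)\le\sum_{B}w_{B}\,\chi_{\alpha}(Q_{B}\|P_{B})$. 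Each conditional term is at most $A_{\alpha}$ (respectively $B_{\alpha}$) and $\sum_{B}w_{B}=1$, so $\chi_{\alpha}(SG_{q,\sigma}(S')\|SG_{q,\sigma}(S))\le A_{\alpha}$ and $\chi_{\alpha}(SG_{q,\sigma}(S)\|SG_{q,\sigma}(S'))\le B_{\alpha}$. Taking logarithms, dividing by $\alpha-1>0$, and maximizing over the two orderings gives $D_{\alpha}\le\frac{1}{\alpha-1}\log\max(A_{\alpha},B_{\alpha})$, which is the claimed RDP guarantee.

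For the final assertion I must show $A_{\alpha}\ge B_{\alpha}$ for all admissible $(q,\sigma)$. Writing $\mu=\mu_{0}\,r$ with $r(z)=(1-q)+q\,e^{(2z-1)/(2\sigma^{2})}$, both quantities become moments under $\mu_{0}$: $A_{\alpha}=\mathbb{E}_{\mu_{0}}[r^{\alpha}]$ and $B_{\alpha}=\mathbb{E}_{\mu_{0}}[r^{1-\alpha}]$, with the normalization $\mathbb{E}_{\mu_{0}}[r]=1$. Their difference equals $\mathbb{E}_{\mu_{0}}[\phi(r)]$ for $\phi(t)=t^{\alpha}-t^{1-\alpha}$, which is strictly increasing and vanishes at $t=1$; since $\phi$ crosses zero exactly at the mean of $r$, a bare application of Jensen is insufficient and the sign genuinely depends on the right-skewness of $r$ (equivalently, of the log-normal $e^{(2Z-1)/(2\sigma^{2})}$). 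This is the step I expect to be the main obstacle. I would resolve it either through the reflection symmetry $\mu_{1}(z)=\mu_{0}(1-z)$, which rewrites $B_{\alpha}$ as $\mathbb{E}_{\mu_{0}}[W^{\alpha}(q+(1-q)W)^{1-\alpha}]$ with $W=e^{(2Z-1)/(2\sigma^{2})}$ and permits a pairwise comparison of integrands, or by expanding both moments in powers of $q$ and reducing to the closed-form Gaussian moments $\mathbb{E}_{\mu_{0}}[(\mu_{1}/\mu_{0})^{k}]=e^{k(k-1)/(2\sigma^{2})}$, checking the resulting inequality term by term.
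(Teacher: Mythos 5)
This theorem is background material imported verbatim from \cite{RDP_SGM_MRZ19}; the paper contains no proof of it, so your attempt can only be judged against that source's argument. Your first half essentially reproduces it and is sound: writing $S'=S\cup\{x_0\}$, conditioning on the subsample of shared points so that both mechanisms become mixtures with identical weights $w_B$, invoking joint convexity of $(p,p')\mapsto p^{\alpha}(p')^{1-\alpha}$ for $\alpha>1$ to peel the mixtures apart inside $\chi_\alpha$, and using translation/rotation invariance plus additivity of R\'enyi divergence over product factors to collapse to the one-dimensional pair $(\mu,\mu_0)$. The attribution of $A_\alpha$ to $D_\alpha(\mu\|\mu_0)$ and $B_\alpha$ to $D_\alpha(\mu_0\|\mu)$ is also correct. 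One soft spot: the reduction to displacement exactly $\|v\|_2=1$ (your ``monotonicity check'') is asserted rather than proved, and it does require an argument.

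The genuine gap is the final clause, $A_\alpha\geq B_\alpha$. You correctly flag it as the main obstacle, but you never prove it; you only name two candidate strategies, and neither works as described. Your reflection identity $B_\alpha=\mathbb{E}_{\mu_0}[W^{\alpha}(q+(1-q)W)^{1-\alpha}]$ is correct, but a comparison of integrands against $A_\alpha=\mathbb{E}_{\mu_0}[((1-q)+qW)^{\alpha}]$ is impossible: setting $h(w)=((1-q)+qw)^{\alpha}-w^{\alpha}((1-q)w+q)^{1-\alpha}$, one has $h(1)=0$ but $h'(1)=-(1-q)<0$, so the pointwise inequality fails on $\{W\in(1,1+\varepsilon)\}$, a set of positive $\mu_0$-measure. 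Any correct argument must pair $z$ with $1-z$ and exploit cancellation; the paired deficit $g(w)=(u^{\alpha}-u^{1-\alpha})-(w^{\alpha}v^{1-\alpha}-w^{1-\alpha}v^{\alpha})$ with $u=(1-q)+qw$, $v=(1-q)w+q$ satisfies $g(1)=g'(1)=0$, so its nonnegativity is a delicate second-order fact, not a routine check. The series route fares no better: for fractional $\alpha$ the coefficients $\binom{\alpha}{k}$ alternate in sign for $k>\alpha$, and the binomial expansion of $\mu^{1-\alpha}$ in powers of $q$ diverges on the positive-measure region where $q(W-1)\geq 1$, so a term-by-term comparison is unavailable. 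In \cite{RDP_SGM_MRZ19} the inequality $A_\alpha\geq B_\alpha$ is a standalone theorem with its own nontrivial proof; as written, your proposal establishes the $\max(A_\alpha,B_\alpha)$ bound but not the reduction to $A_\alpha$ alone, which is precisely the part of the statement that makes the privacy accountant computable.
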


Finally, \cite{RDP_SGM_MRZ19} describes a procedure to compute $A_{\alpha}(q, \sigma)$ depending on $\alpha$.

\textsc{Case I: Integer $\alpha$}
\begin{equation}
A_{\alpha} = \sum_{k=0}^{\alpha} \binom{\alpha}{k} (1-q)^{\alpha-k} q^k \exp \left(\frac{k^2 - k}{2 \sigma^2}\right)
\end{equation}

\textsc{Case II: Fractional $\alpha$}
\begin{multline}
A_{\alpha} = \sum_{k=0}^{\infty} \binom{\alpha}{k} \frac{e^{\frac{k^2 - k}{2 \sigma^2}}}{2}
\left[ (1 - q)^{\alpha - k} q^k \text{ erfc} \left( \frac{k - z_1}{\sqrt{2} \sigma} \right) \right.\\
\left. + (1-q)^k q^{\alpha - k} \text{ erfc} \left( \frac{z_1 - k }{\sqrt{2} \sigma} \right) \right],
\end{multline}
where $z_1 = \frac{1}{2} + \sigma^2 \ln (q^{-1} - 1)$.

In the case of DP-SGD, the gradient function on which the SGM is applied
has a sensitivity bounded by $C$ and not $1$.
However, one can always renormalize this function by $C$ to achieve sensitivity $1$, apply the SGM
and post-process the result by re-multiplying it by $C$, yielding a result equivalent
to DP-SGD.
Due to Theorem~\ref{thm:postprocessing}, the resulting DP-SGD mechanism has the same privacy budget as
the SGM applied on the normalized gradient function.

\paragraph{Privacy Budget of the Overall DP-SGD Training}
By using the RDP composition theorem, applying DP-SGD for~$T$ steps with batches sampled
according to the SGM yields a mechanism satsifying $(\alpha, \frac{T}{\alpha - 1} \log (A_{\alpha}(q, \sigma)))$-RDP. 
This bound can then be converted to an $(\epsilon, \delta)$-DP budget using Theorem~\ref{thm:rdp2dp}.

In practice, in Opacus and TensorFlow Privacy, the RDP privacy budget
is evaluated for multiple values $(\alpha_k)_{k \in [K]}$, yielding
a sequence of RDP privacy budgets $\{(\alpha_k, \epsilon_k)\}_{k \in [K]}$.
Then, given an input failure probability $\delta$, each RDP budget is
mapped to an $(\epsilon, \delta)$-DP privacy budget,
yielding a sequence~$\{(\epsilon_k, \delta)\}_{k \in [K]}$.
Finally, the best privacy budget is provided:~$(\min_{k \in [K]} \epsilon_k, \delta)$.

\subsection{Correctly Evaluating the Privacy Budget in ML}\label{subsec:correctly_evaluating}

The main challenge when building and implementing a randomized DP mechanism
is to be sure that its privacy budget is correctly evaluated.
Almost all DP implementations rely on DP-SGD with the analysis of~\cite{RDP_SGM_MRZ19}
to evaluate the privacy budget.
However, common data science procedures used to build the batches, 
fill missing data, perform feature selection, or manage imbalanced training datasets are 
often incompatible with this privacy budget evaluation
or will drastically increase the overall privacy budget.
In this section, we provide details on each of these crucial steps in light of privacy requirements.

\paragraph{Batch construction}
The analysis of~\cite{RDP_SGM_MRZ19} recalled in Sec.~\ref{subsec:RDP_DPSGD} is based on Sampled Gaussian Mechanism.
As a consequence, this analysis assumes that to build one batch, each sample in the dataset is selected with probability $q$.
In addition, each batch is built independently from the other batches.
Using another batch construction will break this assumption
and the privacy budget will then be incorrectly evaluated.
In particular, one cannot sample constant-sized batches through random dataset shuffling at each epoch
and use the privacy accounting method from~\cite{RDP_SGM_MRZ19}.
In our approach, we keep the batch construction which is used in~\cite{RDP_SGM_MRZ19}.

\paragraph{Missing data imputation}
If the dataset contains some missing values, a common method to fill them consists
in evaluating the mean or median of each feature in the training dataset and filling
missing values in training dataset and test dataset with the corresponding mean or median.
However, without modification, this algorithm does not satisfy $(\epsilon, \delta)$-DP.
Indeed, revealing the mean and median of each feature in the training dataset without modification
would lead to potential privacy leaks.
A direct solution could be to add noise to these statistics
in order to make the resulting mechanism $(\epsilon, \delta)$-DP compliant.
This operation would then need to be included in the estimation 
of the overall privacy budget.

Instead, we choose to favor privacy over precision, and fill the missing values
with zeros, therefore avoiding impacting the privacy budget.
We emphasize that if one uses part of the data to infer the missing values, then this 
has to be reflected in the privacy accounting.

\paragraph{Feature selection}
Most genomic datasets contain many more features than samples.
Feature selection is often applied on this kind of dataset to reduce the number of
predictive features in an attempt to avoid overfitting.
In addition, this feature selection reduces the number of required model parameters, 
often decreasing the overall time-to-train.
Feature selection methods such as principal component analysis (PCA) are applied on the training set,
and then only the selected components are retained for the training and test sets.
However, revealing the selected features, or the selected embedding space, in order to apply 
them on test set does not satisfy $(\epsilon, \delta)$-DP.
In order to securely use these feature selection methods, one would
need to allocate a part of the overall privacy budget to them
and to transform the feature selection method by adding noise in order to obtain
an $(\epsilon, \delta)$-DP mechanism.

In this competition, we would like to predict breast cancer occurrence based on genomic data.
Many public works~\cite{citbcmst,rotterdam} have already selected the best genes to use for this task.
Our feature selection is based on these selected public genes.
Thus, we do not need to allocate a part of the privacy budget to the feature selection.

\paragraph{Imbalanced training dataset}
A common method to manage imbalanced training dataset is oversampling.
This method consists to artificially duplicate in the training set some samples
in order to have the same number of samples in each category.
Unfortunately, the evaluation of the privacy budget with a training set with duplicate samples is not easy.
In our approach, we do not modify the training set
in order to ensure that the evaluation of the privacy budget is correct.

\section{Proposed Method}\label{sec:method}

\subsection{Our Algorithm: DP-SGD with Cyclic FL}

Our proposed method is based on DP-SGD with a cyclic FL strategy.
The first client randomly initializes the model parameters.
For each federated round, the first client trains the model on its local dataset for $E$ DP-SGD steps,
then it sends the updated model parameters to the second client.
The second client trains the model on its local dataset during $E$ DP-SGD steps,
and sends back the updated model parameters to the first client.
$N$ such federated rounds are performed overall.
Algorithm~\ref{algo:walk_DPSGD} details the proposed method.

\begin{algorithm}[h!]
\caption{DP-SGD with cyclic FL}
\label{algo:walk_DPSGD}
\DontPrintSemicolon
\KwInput{
    \textbf{Common inputs:}
    loss function $\mathcal{L}: (\theta, x) \mapsto \mathcal{L}(\theta, x)$,
    sample rate $q$,
    learning rate $\eta$,
    noise scale $\sigma$,
    gradient norm bound $C$,
    number of FL rounds $N$,
    number of local batch updates per FL round $E$,
    list of selected genes (from a public work) $G$.\;
    \textbf{Client $1$ inputs:} samples $D^1=\{x_1^1, ..., x_{s_1}^1\}$.\;
    \textbf{Client $2$ inputs:} samples $D^2=\{x_1^2, ..., x_{s_2}^2\}$.\;
}
\textbf{Prepare dataset locally on each client $c$}\;
Feature selection: $D^c_G = \{x_i^c[G]\}_{i \in [s_c]}$\;
Fill missing data independently on each client dataset.\;
\textbf{Initialization}\;
Client 1 randomly initializes the model parameters $\theta$\;
\For{$t \in [N]$}{
    \For{$c \in [2]$}{
        \For{$j \in [E]$}{
            \textbf{Create batch}\;
            $B_{t,j}^c = \{x: x\in D^c_G \text{ sampled w/ proba. } q\}$\;
            \textbf{Compute gradients}\;
            $\forall x \in B_{t,j}^c, g_{t, j}^c(x) \leftarrow \nabla_{\theta} \mathcal{L}(\theta, x)$\;
            \textbf{Clip gradients} \;
            $\forall x \in B_{t,j}^c, \bar{g}_{t,j}^c(x) \leftarrow \frac{g_{t,j}^c(x)}{\max(1, ||g_{t,j}^c(x)||_2 / C)}$\;
            \textbf{Add noise}\;
            $n_{t, j}^c \sim \mathcal{N}(0, \sigma^2 C^2 I)$\;
            $\bar{g}_{t,j}^c \leftarrow \frac{1}{|B_{t,j}^c|}\left(\sum_{x \in B_{t,j}^c} \bar{g}_{t,j}^c(x) + n_{t, j}^c \right)$\;
            \textbf{Descent}\;
            $\theta \leftarrow \theta - \eta \bar{g}_{t,j}^c$\;
        }
        Client $c$ sends $\theta$ to the other client\;
    }
}
\KwOutput{
    $\theta$ and compute the overall privacy cost $(\epsilon, \delta)$
    using the method described in Section~\ref{subsec:privacy_evaluation}.
}
\end{algorithm}

\subsection{Privacy Budget Evaluation}
\label{subsec:privacy_evaluation}

Before presenting our privacy budget evaluation, we need to define the notion of DP in FL.
In FL, each participant is only interested in the privacy of its own dataset.
Thus, we evaluate the privacy budget of each participant separately from the point of view of a
single participant only. 
In particular, all steps performed by the other client can be considered as post-processing.
Since the computations performed at each client are identical for 
this challenge, estimating the privacy budget is symmetric with respect to the two participants,
thus understanding the privacy budget at a single arbitrary participant is all we need to know to
understand the privacy budget at all participants. 

\begin{theorem}[RDP Privacy Budget of Algorithm~\ref{algo:walk_DPSGD}]
\label{th:our_privacy_budget}
Algorithm~\ref{algo:walk_DPSGD} satisfies $(\alpha, N \cdot E \cdot \frac{1}{\alpha-1} \log (A_{\alpha}(q, \sigma)))$-RDP.
\end{theorem}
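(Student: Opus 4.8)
The plan is to adopt the single-participant viewpoint advocated just before the statement: by the symmetry noted there, it suffices to bound the RDP budget incurred by the first client, so I fix client~1 and take the adjacent databases to differ in exactly one of client~1's local samples, holding client~2's data fixed. The first fact I would establish is that \emph{every} operation performed by client~2 is post-processing from client~1's point of view. Client~2 only ever receives the shared parameter vector $\theta$, and its update is a randomized map applied to $\theta$ that does not access client~1's data except through $\theta$ itself. Hence, by Theorem~\ref{thm:postprocessing}, neither client~2's local DP-SGD steps nor the model transfers contribute to client~1's privacy budget, and I may regard the whole computation visible at client~1 as an adaptive sequence built only from client~1's own steps, with client~2's blocks folded in as post-processing between them.

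Next I would analyze a single DP-SGD step executed by client~1. The batch $B_{t,j}^1$ is formed by including each local sample independently with probability $q$, and independently across steps, so the sampling assumption underlying Def.~\ref{def:SGM} holds exactly. The per-sample gradients are clipped to norm $C$, so the summed clipped-gradient function $f$ has sensitivity $C$ in the sense of Def.~\ref{thm:sensitivity}. Following the normalization argument recalled in the excerpt, I would express this step as the Sampled Gaussian Mechanism applied to $f/C$ (which then has sensitivity $1$), composed with the deterministic post-processing that re-multiplies by $C$, divides by the batch size, and performs the descent update. Theorem~\ref{thm:rdp_sgm} gives that the SGM on $f/C$ is $(\alpha, \tfrac{1}{\alpha-1}\log A_\alpha(q,\sigma))$-RDP, using the stated dominance $A_\alpha \geq B_\alpha$, and Theorem~\ref{thm:postprocessing} shows the subsequent rescaling and descent leave this budget unchanged. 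Thus each individual client~1 step is $(\alpha, \tfrac{1}{\alpha-1}\log A_\alpha(q,\sigma))$-RDP.

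Finally I would compose. Over $N$ federated rounds client~1 performs $E$ such steps per round, for a total of $N \cdot E$ steps. Each step is adaptive, starting from the current $\theta$, which depends on all previous outputs (including the intervening post-processing by client~2), but this is precisely the regime covered by Theorem~\ref{thm:rdpcomp}, where $g$ may depend on both the previous output and the database. Iterating it over the $N \cdot E$ steps, the per-step budgets add, yielding $(\alpha, N \cdot E \cdot \tfrac{1}{\alpha-1}\log A_\alpha(q,\sigma))$-RDP, which is the claim.

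The step I expect to be the main obstacle is making the interleaving rigorous. Theorem~\ref{thm:rdpcomp} is phrased for a clean two-stage composition of $f$ followed by $g$, whereas here client~1's DP-SGD blocks alternate with client~2's post-processing, and the object client~1 releases at the start of each round is a parameter vector already modified by client~2. I would handle this by grouping, for each round, client~1's $E$ steps together with the trailing client~2 block into a single adaptive stage whose RDP cost equals that of the $E$ client~1 steps alone (client~2's contribution being post-processing), and then applying Theorem~\ref{thm:rdpcomp} inductively across the $N$ rounds. The care required is in verifying that the adaptivity and measurability hypotheses of the composition theorem are genuinely satisfied at each stage once the post-processing blocks are absorbed.
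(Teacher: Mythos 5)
Your proposal is correct and takes essentially the same route as the paper's own proof: fix client~1's viewpoint (with symmetry covering client~2), bound each local step via the Sampled Gaussian Mechanism analysis of Theorem~\ref{thm:rdp_sgm}, absorb client~2's updates as post-processing via Theorem~\ref{thm:postprocessing}, and sum the $N \cdot E$ per-step budgets with the composition Theorem~\ref{thm:rdpcomp}. The extra care you devote to the sensitivity normalization by $C$ and to the adaptive interleaving of composition with post-processing is detail the paper leaves implicit (the normalization is handled in its background section), not a different argument.
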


\begin{proof}
We will only prove the privacy budget from the point of view of the client $1$.
The analysis of the privacy budget from the point of view of the client $2$ is similar.

Our privacy budget evaluation is only based on the analysis from \cite{RDP_SGM_MRZ19}
and Theorem~\ref{thm:postprocessing} related to RDP post-processing.
By applying Theorem~\ref{thm:rdp_sgm} and Theorem~\ref{thm:rdpcomp},
the RDP privacy budget after the $E$ first DP-SGD steps
is equal to $(\alpha, E \cdot \frac{1}{\alpha-1} \log (A_{\alpha}(q, \sigma)))$.
Then the~$E$ local steps performed by client~$2$ are considered as post-processing from client~$1$'s point of view.
Thus, the RDP privacy budget of one FL round for client~$1$
is equal to $(\alpha, E \cdot \frac{1}{\alpha-1} \log (A_{\alpha}(q, \sigma)))$.

Finally, by applying the RDP composition theorem~\ref{thm:rdpcomp}, the RDP privacy budget of the overall training
is equal to $(\alpha, N \cdot E \cdot \frac{1}{\alpha-1} \log (A_{\alpha}(q, \sigma)))$.
\end{proof}

As in Opacus and TensorFlow Privacy, we evaluate the~$(\epsilon, \delta)$-DP privacy budget by
first, evaluating the~$(\alpha, \epsilon)$-RDP privacy budget
for various $\alpha$'s using Theorem~\ref{th:our_privacy_budget},
then converting each~$(\alpha, \epsilon)$-RDP privacy budget into a corresponding~$(\epsilon, \delta)$-DP privacy budget
based on Theorem~\ref{thm:rdp2dp} and finally by keeping the best~$(\epsilon, \delta)$-DP privacy budget.

Our experiments are based on Opacus library.
Unfortunately, this library is incompatible with the batch construction based on the Sampled Gaussian Mechanism.
Thus, we patched it in order to obtain a correct privacy budget evaluation. 

\section{Experiments}\label{sec:experiments}

\subsection{Dataset and Model Architecture}

\paragraph{Dataset Composition}
In the competition, a single dataset containing
61 normal samples and 529 tumorous samples is provided.
Each sample has $17, 814$ genes.
We randomly split this dataset using label stratification into
one training set ($90\%$) and one test set ($10\%$).

\paragraph{Dataset Pre-processing}
As explained in Section~\ref{subsec:correctly_evaluating},
our gene selection is not based on the training dataset
in order to keep the overall privacy budget for the training.
For our gene selection, we test two public genes signatures:
\begin{itemize}
\item the \texttt{Rotterdam} signature~\cite{rotterdam} which contains 89 genes,
out of which only 69 are present in the competition dataset;
\item the \texttt{citbcmst} signature~\cite{citbcmst} which contains 257 genes,
out of which 240 are present in the competition dataset.
\end{itemize}

As explained Section~\ref{subsec:correctly_evaluating}, in order to fill the missing data,
we impute missing entries with zeros.

\paragraph{Model Architecture}
We observed that the classification task is prone to overfitting due to the ease of the breast cancer detection task.
Thus, we only train simple models: a linear logistic regression model and a shallow multi-layer perceptron (MLP).
For all tested privacy budgets, we obtain a better performance with the Logistic Regression model
than with the shallow MLP.

\paragraph{Metric}
To evaluate this binary classification task in spite of the dataset imbalance
we use the challenge's unweighted accuracy metric, defined as
\begin{equation}\label{eq:accuracy}
\mathrm{Accuracy} = \frac{\mathrm{TP} + \mathrm{TN}}{\mathrm{TP} + \mathrm{TN} + \mathrm{FP} + \mathrm{FN}},
\end{equation}
where $\mathrm{TP}$ is the number of True Positives, $\mathrm{TN}$ the number of True Negatives,
$\mathrm{FP}$ the number of False Positives and $\mathrm{FN}$ the number 
of False Negatives in the test dataset.
In other words, accuracy~\eqref{eq:accuracy} is the ratio between the number of samples
correctly classified and the total number of samples.
This accuracy is the metric used by the competition organizers.

\begin{figure*}[t!]
    \centering
         \begin{subfigure}[b]{0.32\textwidth}
             \centering
             \includegraphics[width=\textwidth, trim={0.5cm, 0.3cm, 1.6cm, 0.5cm}, clip]{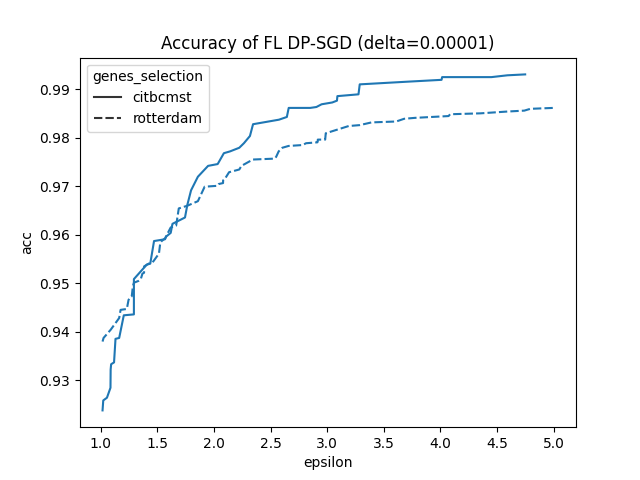}
             \caption{$\delta=10^{-5}$}
         \end{subfigure}
         \hfill
         \begin{subfigure}[b]{0.32\textwidth}
             \centering
             \includegraphics[width=\textwidth, trim={0.5cm, 0.3cm, 1.6cm, 0.5cm}, clip]{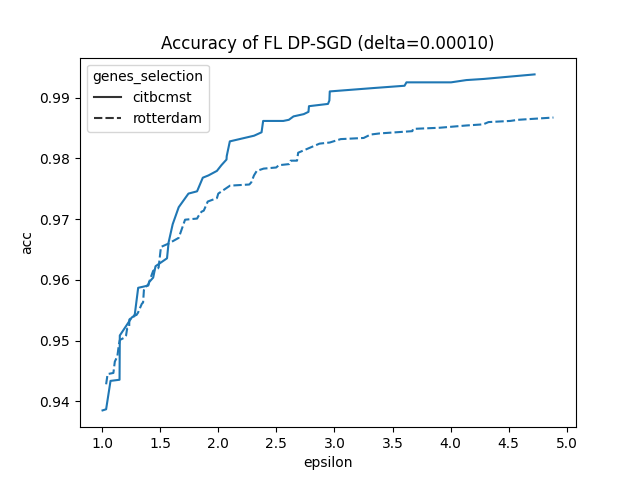}
             \caption{$\delta=10^{-4}$}
         \end{subfigure}
         \hfill
         \begin{subfigure}[b]{0.32\textwidth}
             \centering
             \includegraphics[width=\textwidth, trim={0.5cm, 0.3cm, 1.6cm, 0.5cm}, clip]{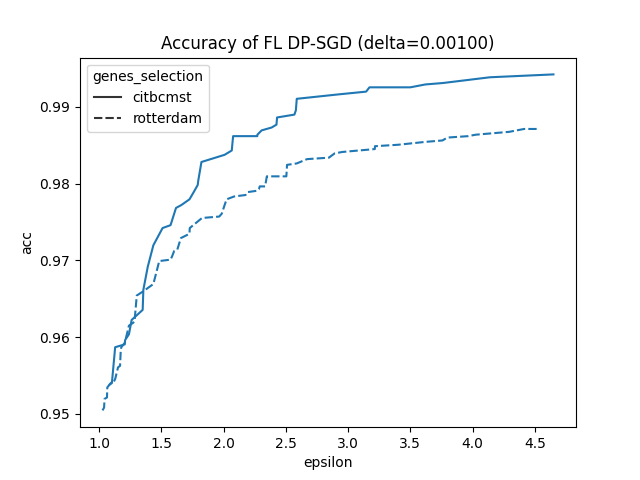}
             \caption{$\delta=10^{-3}$}
         \end{subfigure}
            \caption{Average validation accuracy over 50 independent trials for different privacy budgets.}
            \label{fig:fine_tuning_results}
    \end{figure*}

\subsection{Parameters Selection and Results}

\paragraph{Set of Parameters}
There are a number of parameters to fine-tune for the proposed procedure, e.g.
\begin{itemize}
\item the gene signature (\texttt{Rotterdam} or \texttt{citbcmst}),
\item the sample rate $q$,
\item the learning rate $\eta$,
\item the noise scale $\sigma$,
\item the gradient norm bound $C$,
\item the number of federated rounds $N$, and
\item the number of local batch updates per federated round, $E$.
\end{itemize}
We performed a grid search to fine-tune these parameters.

\paragraph{Training Process}
For each experiment, we randomly split the training set using label stratification into
the client~$1$'s training set ($40\%$), the client $2$'s training set ($40\%$) and a holdout validation set ($20\%$).
Then each client locally prepares their training set by applying gene selection and filling missing data.
Then the two clients perform an FL DP-SGD training following Algorithm~\ref{algo:walk_DPSGD}.
Finally, we evaluate the accuracy of the trained model on the validation set.

For each set of parameters, the above process is performed~50 times with different random seeds
to split the training set. Each seed also controls model parameter initialization, batch sampling,
and noise realization added to gradients.
Then, the mean accuracy over the~50 experiments is recorded.
For each set of parameters, the corresponding $(\epsilon, \delta)$-DP privacy budget is evaluated
from the parameters~$(q, \sigma, N, E)$.

Note that the goal of taking the mean accuracy over 50 runs is to evaluate the expected performance more accurately
as training is sensitive to all sources of randomness (splits, initialization, batch sampling and DP noise).

\paragraph{Results}
We evaluate the performance of the proposed algorithm under a varying
$(\epsilon, \delta)$-DP privacy budget, reported in Figure~\ref{fig:fine_tuning_results}.

Results are obtained as follows.
We first define a grid of privacy budget parameters~$\lbrace (\epsilon_i, \delta_i) \rbrace_i$.
Next, for each value~$(\epsilon_i, \delta_i)$, we compute the best mean validation accuracy obtained
by our algorithm when varying its hyperparameters.
Note that we restrict this search to hyperparameters
yielding a privacy budget~$(\epsilon, \delta)$ satisfying~$\epsilon \leq \epsilon_i$
and $\delta \leq \delta_i$, following Lemma~\ref{lemma:inclusion_dp}.
This allows us to plot the curves in Figure~\ref{fig:fine_tuning_results}.
We additionally save each of these optimal experiments in a comma-separated values (CSV) file containing,
for each experiment, its set of hyperparameters, the actual privacy budget~$(\epsilon, \delta)$
it yields and the mean accuracy over the validation set.
This CSV file is used in Sec.~\ref{subsec:privacy2params}

In Figure~\ref{fig:fine_tuning_results}, we see that for
a low privacy budget ($\epsilon=1$ and~$\delta=10^{-5}$), we obtain an accuracy of $93.5\%$.
When the privacy budget increases, the accuracy quickly reaches its maximum, around $99.5\%$.
Further, for all values of~$\delta$, we observe that for low~$\epsilon$,
training with the \texttt{Rotterdam} signature allows to obtain higher accuracy
than using the \texttt{citbcmst} signature, while the contrary takes place for larger values of~$\epsilon$.
We suspect that the small number of features in the \texttt{Rotterdam} might be helpful in reducing the variance in the low~$\epsilon$ regime,
while for larger~$\epsilon$, the small amount of noise added makes it possible to exploit all the information stored
in the \texttt{citbcmst} signature.

\subsection{From privacy budget to set of parameters}\label{subsec:privacy2params}
In the competition, our training algorithm should take as input target privacy parameters~$(\epsilon_t, \delta_t)$
and output a mean accuracy, not the other way around.
We now explain how we use the CSV file introduced in the previous section to tackle this aspect of the competition.

We first sort the CSV file based on values of~$(\delta, \epsilon)$ in lexicographic order,
i.e. first with respect to~$\delta$,
and then with respect to~$\epsilon$.
Then, given target privacy parameters~$(\delta_t, \epsilon_t)$, we select the experiment contained in the CSV file
with the largest value of~$(\delta, \epsilon)$ according to lexicographic order,
under the constraint that~$(\delta, \epsilon) \leq (\delta_t, \epsilon_t)$.
We provide the set of parameters of the selected experiment to the algorithm introduced in Section~\ref{sec:method},
ultimately yielding an accuracy.

\section{Conclusion}\label{sec:conclusion}

Throughout this competition, we faced two main challenges:
\begin{itemize}
\item ensuring correctness of the privacy budget evaluation;
\item achieving a good trade-off between accuracy and the privacy budget.
\end{itemize}

Regarding the correctness of the privacy budget evaluation,
our algorithm is based on the well known and proved algorithm DP-SGD~\cite{DPSGD_ACGM16},
while keeping the batch construction of~\cite{RDP_SGM_MRZ19}
in order not to invalidate the privacy budget analysis of this last paper.
Throughout our experiments we make sure that the only operation impacting our
privacy budget is the actual training by ensuring that data preprocessing
doesn't leak private information.
Last, but not least, the proposed FL algorithm extends DP-SGD in the FL setting,
keeping a good balance between accuracy and privacy.

Our experiments show that with our algorithm it is possible
to achieve a good trade-off between accuracy and privacy budget.
We obtain $93.5\%$ of accuracy for a stringent privacy budget ($\epsilon=1$ and $\delta=10^{-5}$)
and the accuracy quickly converges to $99.5\%$ when $\epsilon$ increases.

\section*{Acknowledgements}
We thank Aurelie Kamoun and Alberto Romagnoni who helped selecting the genes signatures used in our submission.

\bibliographystyle{IEEEtran}
\bibliography{references}

\end{document}